\newcommand{\norm}[1]{\left\lVert#1\right\rVert}
\newcommand{\R}{\mathbb{R}}
\newcommand{\s}{\mathbf{s}}
\newcommand{\E}{\mathbb{E}}
\newtheorem{theorem}{Theorem}
\newtheorem{remark}{Remark}
\title{\LARGE \bf Post-Convergence Sim-to-Real Policy Transfer: A Principled \\ Alternative to Cherry-Picking
}
\author{Dylan Khor$^{1}$ and Bowen Weng$^{1}$
\thanks{$^{1}$Department of Computer Science, Iowa State University, IA, USA;  {\tt\footnotesize dkhor@iastate.edu, bweng@iastate.edu.}}%
}
\begin{document}
\maketitle
\thispagestyle{empty}
\pagestyle{empty}

\begin{abstract}
Learning-based approaches, particularly reinforcement learning (RL), have become widely used for developing control policies for autonomous agents, such as locomotion policies for legged robots. RL training typically maximizes a predefined reward (or minimizes a corresponding cost/loss) by iteratively optimizing policies within a simulator. Starting from a randomly initialized policy, the empirical expected reward follows a trajectory with an overall increasing trend. While some policies become temporarily stuck in local optima, a well-defined training process generally converges to a reward level with noisy oscillations. However, selecting a policy for real-world deployment is rarely an analytical decision (i.e., simply choosing the one with the highest reward) and is instead often performed through trial and error. To improve sim-to-real transfer, most research focuses on the pre-convergence stage, employing techniques such as domain randomization, multi-fidelity training, adversarial training, and architectural innovations. However, these methods do not eliminate the inevitable convergence trajectory and noisy oscillations of rewards, leading to heuristic policy selection or cherry-picking. This paper addresses the post-convergence sim-to-real transfer problem by introducing a worst-case performance transference optimization approach, formulated as a convex quadratic-constrained linear programming problem. Extensive experiments demonstrate its effectiveness in transferring RL-based locomotion policies from simulation to real-world laboratory tests.
\end{abstract}

\section{INTRODUCTION}
Fig.~\ref{fig:train_logs} (b) illustrates the average reward trajectory from training a locomotion policy for the Unitree G1 humanoid robot in Isaac Gym using reinforcement learning (RL)~\cite{unitreegithub} with the random seed being 50. Initially, the randomly initialized policy yields a low training reward. As training progresses, the reward increases steadily, peaking at approximately the 3000th iteration before experiencing a slight decline. It then climbs back and stabilizes within a bounded interval for the remainder of the training process (up to the 20000th iteration).

Although this result is based on a specific random seed for the given experiment, the overall trend is widely observed across various RL applications. Similar training patterns emerge in different setups, reinforcing the generality of this behavior. Note Fig.~\ref{fig:train_logs} (a) with random seed being 1 shows a slightly different behavior towards the second half of the training due to the loss of plasticity as identified by Dohare et al.~\cite{dohare2024loss} (though details are out of the scope of this paper). In practice, the training is typically considered successful once the reward stabilizes at a reasonably large average reward (either permanently as shown in Fig.~\ref{fig:train_logs} (b) or temporarily as shown in Fig.~\ref{fig:train_logs} (a)). However, determining which policy to select for real-world deployment remains an open question.

In practice, this is where most heuristic ``cherry-picking" occurs. To the best of the authors' knowledge, while no specific publications explicitly address this step, it is commonly carried out through an iterative trial-and-error process between the simulator and the real-world. Practitioners typically evaluate policies with sufficiently high empirical rewards, selecting one that performs well in real-world testing. However, this approach lacks a systematic framework, making the sim-to-real policy selection largely ad hoc and unpredictable.

\begin{remark}
    One factor contributing to cherry-picking in policy selection is the misalignment between the optimal reward and the desired behavior. In other words, the reward function may not fully encapsulate the intended characteristics of a well-performing policy. This challenge falls within the broader domain of reward engineering and shaping~\cite{dewey2014reinforcement,hu2020learning}, which has been widely studied but is beyond the scope of this paper. Our discussion proceeds with the given reward function as-is, and our proposed approach remains applicable across various reward function designs, regardless of whether they accurately represent the desired behavior.
\end{remark}

\begin{figure}[t]
    \centering
    \includegraphics[trim={1.cm 1.8cm 3.5cm 2cm}, clip, width=0.99\linewidth]{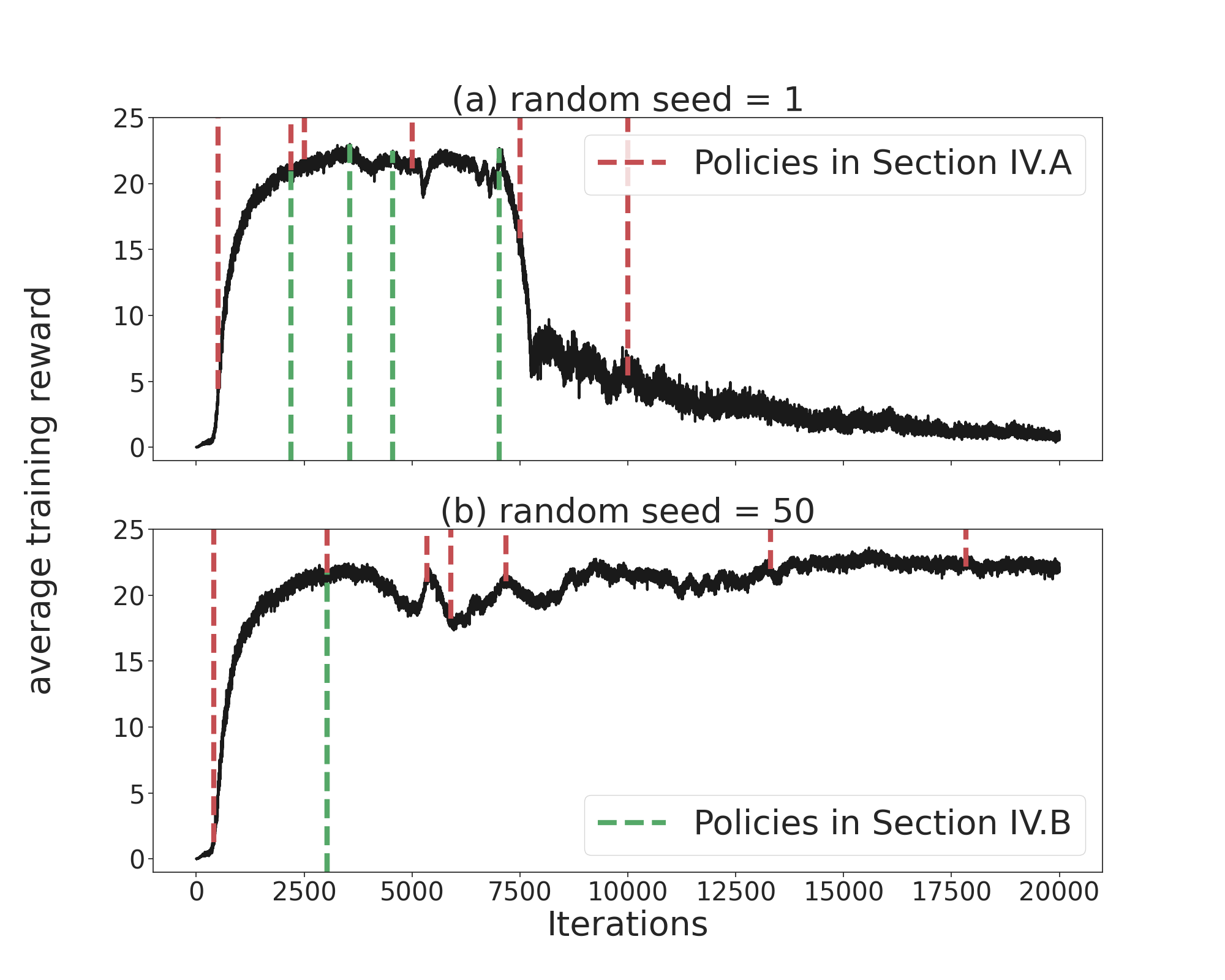}
    \caption{The average training reward of training a locomotion policy using RL for the Unitree G1 robot in Issac Gym simulator~\cite{isaacgym}. The program is a re-execution of~\cite{unitreegithub} with two different random seeds (1 and 50). The highlighted policies (with red and green dashed segments) are further analyzed in Section~\ref{sec:exp} later.}
    \label{fig:train_logs}
\end{figure}

\subsection{Literature Review}\label{sec:intro:liter}
The primary research focus on addressing the sim-to-real gap and policy transfer has largely been centered on the \emph{pre-convergence stage}, i.e., the phase in which the policy is still actively improving during training. At a high level, existing approaches can be categorized into three main philosophies aimed at improving sim-to-real transfer.  

The first category involves increasing variability in the training environment by introducing randomness in visual and geometric parameters~\cite{tobin2017domain,sadeghi2017cad2rl,chebotar2019closing}, physical dynamics~\cite{peng2018sim,chebotar2019closing,hwangbo2019learning,song2023reaching}, different fidelity levels, and causality analysis~\cite{huang2023went}. By exposing the model to a broad range of simulated conditions, these methods encourage the learning of more generalized policies, increasing the likelihood of successful real-world deployment~\cite{tobin2017domain,andrychowicz2020learning}. This category is closely associated with \emph{domain randomization} and related techniques.  

The second category focuses on minimizing the sim-to-real gap as much as possible. This includes efforts to develop high-fidelity simulators that better approximate real-world physics~\cite{todorov2012mujoco, isaacgym, dosovitskiy2017carla}. At the algorithmic level, researchers have explored techniques to understand and reduce the discrepancy, such as addressing Simulation Optimization Bias~\cite{muratore2018domain,muratore2021assessing}, computing the Sim-vs-Real Correlation Coefficient~\cite{kadian2020sim2real}, and other predictive models~\cite{zhang2020predicting,muratore2022robot}. Many of these approaches rely on additional assumptions, such as policy convergence to an optimal solution~\cite{muratore2018domain,muratore2021assessing} or system linearity~\cite{kadian2020sim2real}.  

The third category, rather than seeking broad coverage or precise simulation accuracy, focuses on \emph{adversarial testing}. This involves deliberately designing biased testing conditions that emphasize system vulnerabilities~\cite{shi2024rethinking,capito2020modeled,weng2023comparability,pinto2017robust,mandlekar2017adversarially,jiang2021simgan,lee2018spigan,pedersen2020grasping,zhang2022scgan}. The underlying expectation is that a policy capable of withstanding \emph{adversarial} disturbances will also perform well in less challenging real-world settings~\cite{koren2018adaptive,moss2020adaptive}. However, this assumption does not always hold. Recent studies have highlighted the limitations of adversarial testing, both empirically~\cite{lechner2021adversarial,lechner2023revisiting} and theoretically~\cite{weng2023comparability}, underscoring its challenges as a sole strategy for ensuring robust sim-to-real transfer. 

In practice, as we will see in Section~\ref{sec:exp}, expanding coverage, precisely matching, or introducing adversarial variations of the real-world environment all fail to guarantee strong real-world performance. While these efforts can, to some extent, enhance a policy’s overall robustness, they do not directly translate to improved performance under specific real-world conditions. More importantly, all of the above approaches are applied \emph{before} the policy reaches the commonly accepted \emph{convergence} stage, where training curves typically stabilize while still exhibiting oscillations (as shown in Fig.~\ref{fig:train_logs}). As a result, these methods fail to eliminate the heuristic trial-and-error process often required when selecting policies for real-world deployment, as outlined at the beginning of this paper. 

\subsection{Main Contributions}
Inspired by the aforementioned gaps, particularly the lack of consideration for post-convergence sim-to-real transfer and the absence of formal analysis for policy selection at this stage, this work introduces a structured approach to address these challenges.

Given the policy's performance in a specific simulator, we propose modeling an importance-weighted distribution and provably characterizing its worst-case performance under a bounded distributional divergence constraint. We show that this worst-case transference can be formulated as a convex Quadratic Constrained Linear Programming (QCLP) problem, ensuring a globally optimal solution. Furthermore, we show that this worst-case estimate serves as a more reliable predictor of real-world performance compared to direct estimates from the simulator or other existing approaches. The above theoretical analysis is further demonstrated empirically across a range of sim-to-real scenarios, including locomotion tasks under disturbed and undisturbed conditions. 

\section{PRELIMINARIES AND PROBLEM FORMULATION}
\textbf{Notation: } The set of real and positive real numbers are denoted by $\R$ and $\R_{>0}$ respectively. If $X$ is a set, $|X|$ is its cardinality. If $x$ is a scalar in $\R$, $|x|=\norm{x}_1$ denotes its absolute value. $D_{KL}(P || Q)$ denotes the Kullback-Leibler (KL) divergence of a certain distribution $P$ from a certain distribution $Q$. 

Consider a robot operating in the state space $S \subset \R^n$ (and assume $S$ is finite but could be of extremely large cardinality) and action space $U \subset \R^m$ with a certain control policy $\pi : S \rightarrow U$. It further follows a certain dynamics as
\begin{equation}\label{eq:dyn}
    f(t+1) = f(s(t), \pi(s(t))).
\end{equation}
For a certain distribution of initial states $s_0 \sim p_0(S)$ and for sufficiently many steps of propagation, \eqref{eq:dyn} renders a stable distribution $p$. During a typical sim-to-real transfer of a trained policy $\pi$, the system dynamics $f$ often differ due to discrepancies between the simulation and real-world environments, as well as additional disturbances and uncertainties not explicitly captured in \eqref{eq:dyn}. As a result, these differences lead to the emergence of distinct stationary distributions in simulation and reality.

\subsection{The performance measure with Monte-Carlo sampling}
While RL traditionally evaluates policy performance within the Markov Decision Process (MDP) framework using a predefined discount factor, the performance measure considered in this study follows a more general setting. Here, the policy is evaluated as it stands, without discounting, over a sufficient number of samples to provide a direct and unbiased assessment of its effectiveness. This approach also better aligns with standard testing and performance evaluation and management practices~\cite{al2019risk,capito2024repeatable}. 

Given the finite support $S$ (with a little abuse of notation, the sample space mentioned here could be the same with the state space $S$ mentioned above or a certain subset of it) and any stationary distribution $p$. This paper considers a class of performance measure
\begin{equation}\label{eq:m_p_star}
    \E_p[\psi] = \sum_{\s \in S} \psi(\s)p(\s) 
\end{equation}
with a certain performance metric $\psi: S \rightarrow M \subset \R$ that is monotonically increasing as the performance becomes better (or worse). For example, if $\psi: S \rightarrow \{0,1\}$ with 1 indicating the failure, \eqref{eq:m_p_star} becomes the probability for failure states to occur, i.e., the risk. In the RL settings, $\psi$ gives a single-step reward for any given $s$ and \eqref{eq:m_p_star} is the expected reward. 

Furthermore, let $\Phi_p$ be a set of samples from $S$ following the distribution $p$. The estimate of the measure in \eqref{eq:m_p_star} is
\begin{equation}\label{eq:m_p}
    \hat{\E}_p[\psi] = \frac{1}{|\Phi_p|}\sum_{\s \in \Phi_p:\s \sim p(S)} \psi(\s) p(\s),
\end{equation}
and it is immediate that $\lim_{|\Phi_p| \rightarrow +\infty} \hat{\E}_p[\psi] = \E_p[\psi]$~\cite{chatterjee2018sample}.

In practical empirical estimation, maintaining a significantly large $|\Phi_p|$ is often infeasible. The statistical significance and accuracy of \eqref{eq:m_p} are typically justified using the relative half-width (RHW) of the estimations, evaluated against a predetermined threshold $s_r \in \R _{>0}$ provided within a specified confidence interval~\cite{zhao2017accelerated,feng2020testing,feng2021intelligent}.

\subsection{Importance sampling}
In practice, especially when the performance measure is sparse (e.g., risk estimation), certain states can be exceedingly rare to encounter through standard Monte Carlo sampling, as can be inferred from \eqref{eq:m_p}. For example, in safety-critical applications, unsafe or failure states, which are crucial for evaluating safety performance measures, may rarely occur in typical simulations. This sparsity leads to poor convergence and unreliable estimates when using Monte Carlo methods.

To address the above issue, importance sampling has been proposed as an effective strategy~\cite{gutjahr1997importance,zhao2016accelerated,feng2021intelligent}. By biasing the exploration toward these rare but critical events, importance sampling enhances the probability of sampling from regions of the state space that are otherwise underrepresented. Despite this biased exploration, importance sampling ensures that the final estimate of the performance measure remains unbiased by appropriately adjusting the weights of the sampled trajectories. This reweighting is achieved using the likelihood ratio between the biased and original distributions, thereby providing a statistically accurate estimate of the performance measure under the true distribution while efficiently handling rare events.

Consider the importance distribution $q$ over the same support $S$ as mentioned above. Let $\Phi_q$ be the set of samples from $S$ under $q$, we have
\begin{equation}\label{eq:m_q}
    \hat{\E}_q[\psi] = \frac{1}{|\Phi_q|} \sum_{\s \in \Phi_q:\s \sim q(S)} \psi(\s) \frac{p(\s)}{q(\s)}
\end{equation}
The above estimate is unbiased as $\E_p[\psi] = \lim_{|\Phi_p|\rightarrow \infty} \hat{\E}_p[\psi] = \lim_{|\Phi_q|\rightarrow \infty} \hat{\E}_q[\psi]$.

In the practice of sim-to-real transfer, the desired stationary distribution in the real-world can be represented as $p$, while the simulator-based evaluation, supplied with the same control policy executed in the simulator, yields an importance-weighted distribution $q$. If both $p$ and $q$ were known, one could directly compute the necessary adjustments to improve transferability. However, in practice, neither the real-world distribution $p$ nor the exact stationary distribution $q$ in the simulator is explicitly known—the latter can be approximated, but not precisely defined. 

Existing post-convergence sim-to-real efforts primarily focus on generating different variants of $q$ using high-fidelity simulators, adversarial testing, or domain randomization (adapted from the pre-convergence training stage). These methods aim to improve robustness by diversifying simulated experiences, yet they lack a principled way to quantify performance guarantees in real-world deployment—and, as we shall see later, they do not necessarily serve as reliable indicators of the policy's actual performance in $p$.

This motivates us to introduce an additional weighted distribution $\rho$, positioned within a quantifiable neighborhood of $q$, which induces a worst-case performance estimate. We theoretically prove and empirically demonstrate that this estimate provides a reliable predictor of the policy’s performance in $p$, performing at least as well as $q$ and often yielding better predictive accuracy. Further details will be discussed in Section~\ref{sec:main}.

\subsection{Problem formulation}
The sim-to-real policy transfer at the post-convergence stage aims to make an informed and rational selection of the policy (or policies) to deploy in real-world experiments, which can be costly, unsafe, and unpredictable. Fundamentally, this process can be framed as a \emph{ranking prediction problem}, where the goal is to identify the most promising policies based on a certain indication of their expected real-world performance.

Consider an arbitrary pair of distinct policies, $ \pi_1 $ and $ \pi_2 $, which induce stationary distributions in a certain real-world environment, denoted as $ p_1 $ and $ p_2 $, over certain subsets of $ S $, $ \Phi_1 $ and $ \Phi_2 $. Notably, the distributions $p$ and the corresponding subsets $ \Phi $ are primarily unknown. Suppose the relative correlation between the expected performance under these distributions satisfies $\E_{p_1}[\psi] < \E_{p_2}[\psi]$, the post-convergence sim-to-real policy transfer problem seeks to identify performance indicators $ I_1 $ and $ I_2 $ such that the relative ranking order is preserved, i.e., $I_1 < I_2$.

For a certain policy trained through RL with or without some of the mentioned post-convergence sim-to-real techniques in Section~\ref{sec:intro:liter}, the determination of its real-world performance is often related to its training environment. Assume such policy forms a stationary distribution $p$ over $\Phi \subset S$.
The domain randomization techniques discussed in Section~\ref{sec:intro:liter} often result in a significantly expanded state space, with $\Phi$ being a subset. However, the resulting state distribution may or may not accurately approximate $p$ within the shared region. A similar pattern is observed in adversarial testing methods, where the state distribution shifts toward riskier states. Whether these states yield lower rewards depends heavily on the construction of the reward function. Another approach is to approximate $p$ directly to the best extent possible. However, preserving the relative ranking order of policies requires restrictive conditions on the similarity between $ p $ and the approximated distribution $q$ in simulation.

To this end, it should become conceptually clear—and will be empirically demonstrated in Section~\ref{sec:exp}—that none of the aforementioned distribution $q$ and their corresponding reward expectations serve as effective predictors of real-world reward expectations under $p$. The proposed method aims to address this issue by introducing a more robust predictor for real-world performance.

\section{MAIN METHOD}\label{sec:main}

\begin{figure}
    \centering
    \includegraphics[trim={1.5cm 5.5cm 15.5cm 2cm}, clip, width=0.9\linewidth]{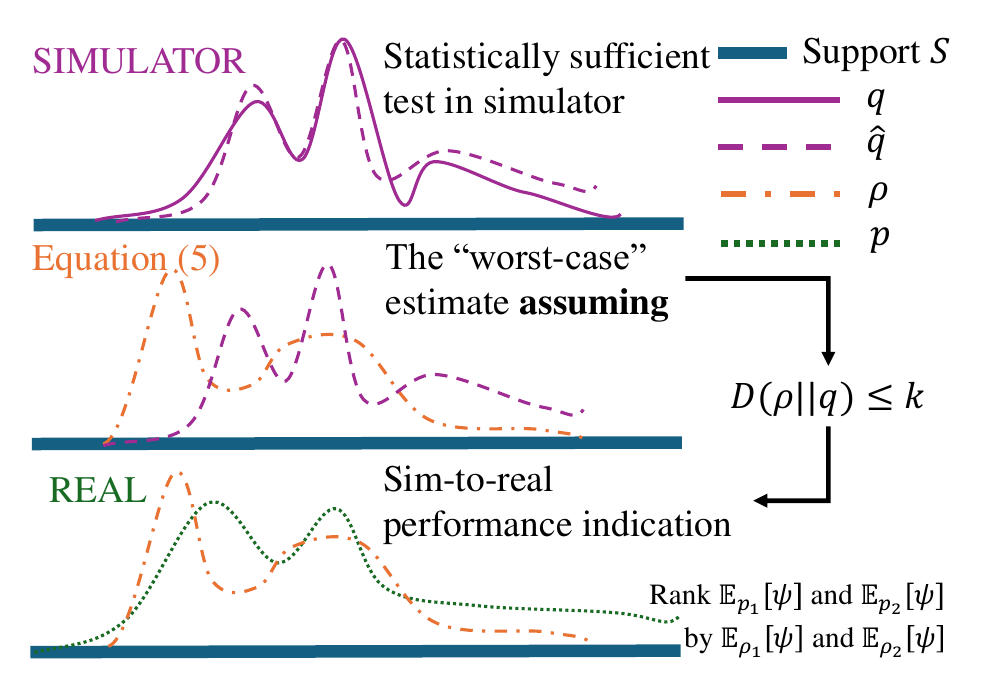}
    \caption{An overview of the main steps in the proposed post-convergence sim-to-real performance indication.}
    \label{fig:overview}
\end{figure}

The proposed method complements existing sim-to-real efforts at the post-convergence stage. It is specifically applied (i) \emph{after} conducting sufficient tests in the simulator under given conditions, yielding a rendered stationary distribution $q$, and (ii) \emph{before} deploying one or more policies on real robots in real-world environments. A conception overview of the proposal is shown in Fig.~\ref{fig:overview}.  

\subsection{The KL-constrained worst-case performance}\label{sec:mod2}
If a sufficient number of tests are conducted in the simulator, the underlying stationary distribution of states $q$ can be estimated with $\hat{q}$ using the collected samples, along with the performance estimation $\hat{\E}_q[\psi]$ provided with a confidence interval $c \in (0,1)$ and a relative half-width value $s_r$ ensuring that, with probability $1-c$, $|\hat{\E}_q[\psi]-\E_q[\psi]| \leq s_r \cdot \hat{\E}_q[\psi]$.
By setting $s_r$ and $c$ arbitrarily close to zero, $\hat{q}$ converges to $q$. Consequently, we do not distinguish between them for the remainder of this paper. This technique is similarly applied in Section~\ref{sec:exp} to achieve a sufficiently accurate approximation of various humanoid robot locomotion policies in real-world experimental settings.

We further propose to consider the worst possible performance measure under the distribution $\rho$ assuming the difference between $\rho$ and $q$ is bounded as $D_{KL}(\rho || q) \leq k, \forall \rho$. Given the same set of states $\Phi_q$ sampled following the underlying distribution of $q$, one can estimate the ``worst-case" performance measure across a class of possible $\rho$ distributions through \eqref{eq:optml_r}. Note there is no strict requirement for $k$ to be small, yet a large $k$ would typically lead to a worse worst-case estimate.

\begin{subequations}\label{eq:optml_r}
    \begin{align}
    \underset{\rho}{\max} \quad & \sum_{\s \in \Phi_q}\psi(s)\rho(s) \\
    \text{s.t. }\quad  & \sum_{\s \in S}\rho(s) = 1 \label{eq:optml_r_sum} \\
    \quad & \rho(\s) \in [0,1], \forall \s \in S \label{eq:optml_r_pb} \\
    \quad & \sum_{\s \in S} \frac{\rho^2(\s)}{q(\s)} - \rho(\s) \leq k \label{eq:optmal_r_kl} 
    \end{align}
\end{subequations}
Note the constraints \eqref{eq:optml_r_sum} and \eqref{eq:optml_r_pb} are both linear (in $\rho$). The maximizer is selected under the assumption that a higher $\psi$ value is preferable (e.g., for risk maximization). Conversely, a minimizer can be used in cases where lower values are desirable (e.g., for reward minimization). In both settings ($\min$ or $\max$), the (positive definite) quadratic constraint \eqref{eq:optmal_r_kl} is coming from the upper bound of KL divergence as $D_{KL}(\rho||q) \leq \sum_{\s \in S} \frac{{\rho}^2(\s)}{q(\s)} - \rho(\s)$. It is adopted to make the optimization problem \eqref{eq:optml_r} a convex QCLP problem with trackable solutions and guarantees to the global optimal~\cite{boyd2004convex}, while also ensuring the bounded distributional divergence of $D_{KL}(\rho\mid\mid q)\leq k$. 

Moreover, the practical efficiency and feasibility of \eqref{eq:optml_r} also depend on the dimension of $q$, which, in the worst case, corresponds to the cardinality of $S$. Most solvers, depending on hardware specifications, can handle $|S|$ up to within the range of $10^6$ to $10^9$ , which is sufficient for all cases studied in Section~\ref{sec:exp}. Consequently, a uniform discretization approach (i.e., rounding all values to a fixed number of decimal places) is adequate. However, in more complex settings, more advanced techniques such as adaptive discretization may be necessary. A detailed discussion of these methods is beyond the scope of this paper and is left for future work.

\subsection{Worst-case expectation improves ranking robustness}
We further prove that the KL-constrained worst-case expectation, derived through \eqref{eq:optml_r}, serves as a more reliable indicator of sim-to-real performance compared to direct outcome comparisons from the simulator or adversarial variations within the simulation environment.

\begin{theorem}\label{thm:rho_rocks}
    Let $p_1$ and $p_2$ be two unknown probability distributions satisfying $\E_{p_1}[\psi] > \E_{p_2}[\psi]$ for some performance measure function $\psi$. Let $q_1$ and $q_2$ be approximations of $p_1$ and $p_2$, respectively, with unknown discrepancies.
    Let $\rho_1$ and $\rho_2$ be the worst-case distributions by \eqref{eq:optmal_r_kl} subject to $D_{KL}(\rho_i\mid\mid q_i)\leq k, i=1,2$. We have $\mathbb{P}(\E_{\rho_1}[\psi] > \E_{\rho_2}[\psi]) \geq \mathbb{P}(\E_{q_1}[\psi] > \E_{q_2}[\psi])$, i.e., the worst-case expectation is at least as good, and typically strictly better, at ranking $\E_{p_1}[\psi]$ and $\E_{p_2}[\psi]$ than using $\E_{q_1}[\psi]$ and $\E_{q_2}[\psi]$ directly.
\end{theorem}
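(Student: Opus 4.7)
The plan is to reduce Theorem~\ref{thm:rho_rocks} to a closed-form shift comparison by exploiting the convex QCLP structure of \eqref{eq:optml_r}. I would first observe that $\sum_{\s}\rho(\s)=1$ turns \eqref{eq:optmal_r_kl} into the chi-squared divergence bound $\chi^{2}(\rho\|q)\leq k$, so \eqref{eq:optml_r} is a standard chi-squared DRO problem. Dualizing in the regime where the box constraint \eqref{eq:optml_r_pb} is slack (moderate $k$, full-support $q_i$), the KKT conditions produce the tilted optimizer $\rho_i^{\star}(\s)\propto q_i(\s)\bigl(1+\alpha_i(\psi(\s)-\E_{q_i}[\psi])\bigr)$ with $\alpha_i$ pinned by the active budget, and back-substitution yields the identity
\begin{equation}\label{eq:closed_sketch}
\E_{\rho_i}[\psi]=\E_{q_i}[\psi]+\sqrt{k\,\operatorname{Var}_{q_i}[\psi]}.
\end{equation}
Simultaneously, whenever the hidden $p_i$ is feasible in \eqref{eq:optml_r}, the maximizer property delivers the one-sided sandwich $\E_{p_i}[\psi]\leq\E_{\rho_i}[\psi]$. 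These two structural facts are the only inputs I would need.

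With $\Delta_q:=\E_{q_1}[\psi]-\E_{q_2}[\psi]$ and $\sigma_{q_i}:=\sqrt{\operatorname{Var}_{q_i}[\psi]}$, substituting \eqref{eq:closed_sketch} converts the goal into
\begin{equation*}
\mathbb{P}\!\bigl(\Delta_q>\sqrt{k}\,(\sigma_{q_2}-\sigma_{q_1})\bigr)\;\geq\;\mathbb{P}(\Delta_q>0),
\end{equation*}
conditional on the unknown-but-fixed truth $\E_{p_1}[\psi]>\E_{p_2}[\psi]$. This is a pure shift-of-threshold statement, which I would attack by splitting on the sign of $\sigma_{q_1}-\sigma_{q_2}$. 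On $\{\sigma_{q_1}\geq\sigma_{q_2}\}$ the threshold is non-positive, so the $\rho$-event is a superset of the $q$-event and contributes non-negative excess mass. On the complementary event, I would invoke the two-sided envelope $\E_{p_i}[\psi]\leq\E_{\rho_i}[\psi]\leq\E_{p_i}[\psi]+2\sqrt{k}\,\sigma_{q_i}$ together with the hypothesis $\E_{p_1}[\psi]>\E_{p_2}[\psi]$ to argue that the mass lost there is dominated by the mass gained on the favorable half.

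The principal obstacle is precisely this second branch: the theorem commits to no explicit noise model for the discrepancies $q_i\mapsto p_i$ and thus gives no direct control on the joint law of $(\sigma_{q_1},\sigma_{q_2})$. My proposed closure is an exchangeability hypothesis on the discrepancy model — that the joint law of $(q_1,q_2)$ is symmetric under reflections about $(p_1,p_2)$ — under which a measure-preserving coupling pairs each unfavorable half-realization with a favorable one, reducing the inequality to ``equality plus a strictly positive rescue term.'' That rescue term is accumulated over the realizations where $\Delta_q\leq 0$ but the variance correction in \eqref{eq:closed_sketch} nevertheless flips the $\rho$-ranking correctly, and it is what underwrites the ``typically strictly better'' clause. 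If a fully distribution-free statement is preferred, the same skeleton still goes through by applying the one-sided envelope $\E_{p_i}[\psi]\leq\E_{\rho_i}[\psi]$ pointwise inside each indicator before taking expectations, at the cost of recovering only the non-strict version of the inequality.
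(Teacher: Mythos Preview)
Your route and the paper's diverge at the first step. The paper argues through a three-stage chain: (i) invoke Donsker--Varadhan to write $\rho_i$ as an exponential tilt $\rho_i(s)\propto q_i(s)e^{-\lambda\psi(s)}$ and claim this reduces variance, $\operatorname{Var}_{\rho_i}(\psi)\leq\operatorname{Var}_{q_i}(\psi)$; (ii) feed this into an unspecified ``general variance-based deviation bound'' $|\E_{p_i}[\psi]-\E_X[\psi]|\leq C\sqrt{\operatorname{Var}_X(\psi)}$ to obtain $\delta_\rho\leq\delta_q$; (iii) apply Chebyshev to the differences $Z,Z'$ and conclude from tighter concentration that $\mathbb{P}(Z'>0)\geq\mathbb{P}(Z>0)$. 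You instead recognize that \eqref{eq:optmal_r_kl} with \eqref{eq:optml_r_sum} is exactly a $\chi^2$-ball, derive the linear-tilt optimizer and the closed-form shift $\E_{\rho_i}[\psi]=\E_{q_i}[\psi]+\sqrt{k\,\operatorname{Var}_{q_i}[\psi]}$, and reduce the claim to a pure threshold comparison in $(\Delta_q,\sigma_{q_1},\sigma_{q_2})$. Your structural identification is in fact more faithful to the actual constraint than the paper's exponential tilt (which solves the KL-ball problem, not the quadratic surrogate actually imposed in \eqref{eq:optmal_r_kl}), and it makes transparent exactly which random quantities must be controlled.

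That transparency, however, also exposes the gap. On the branch $\{\sigma_{q_1}<\sigma_{q_2}\}$ you need a distributional assumption to bound the lost mass, and the exchangeability hypothesis you propose is extraneous to the theorem statement; the paper hides the analogous difficulty inside the unquantified constant $C$ and the leap from Chebyshev concentration to a one-sided tail ordering, so neither argument is airtight, but yours at least names the missing ingredient. Separately, your fallback ``distribution-free'' claim does not go through: the one-sided envelope $\E_{p_i}[\psi]\leq\E_{\rho_i}[\psi]$ (which already requires the extra feasibility assumption $\chi^2(p_i\|q_i)\leq k$) points the same way for both $i$, so it gives no control on the sign of $\E_{\rho_1}[\psi]-\E_{\rho_2}[\psi]$ relative to $\E_{q_1}[\psi]-\E_{q_2}[\psi]$, and the indicator argument you sketch cannot close without a second-moment or symmetry input of exactly the kind you were trying to avoid.
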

\begin{proof} The first step of proving the above theorem is to show that the worst-case estimate from \eqref{eq:optml_r} reduces the variance, i.e., $\text{Var}_{\rho_i}(\psi) \leq \text{Var}_{q_i}(\psi)$. Following the Donsker-Varadhan's variational representation of KL-divergence, we have $\rho_i(s) = \frac{q_i(s)\cdot e^{-\lambda\psi(s)}}{\E_{q_i}[e^{\lambda\psi}]}$ for a Lagrange multiplier $\lambda$ that enforces the KL-divergence constraint. This reweighting downweights high values of $\psi(s)$, effectively shifting probability mass toward values of $\psi$ that are closer to the mean. As a result $\E_{\rho_i}[\psi^2]-(\E_{\rho_i}[\psi])^2 \leq \E_{q_i}[\psi^2]-(\E_{q_i}[\psi])^2$ with equality only if $q_i$ is already the worst-case shifted distribution, meaning no reweighting occurs. This complete the first step showing that using $\rho$ reduces the fluctuation in expectation estimates. 
    
The second step seeks to prove the worst-case estimate $\rho$ is closer to $p$ than $q$ in terms of expectation. To proceed, for any $i$, let $|\E_{p_i}[\psi] - \E_{q_i}[\psi]| = \delta_q$ and $|\E_{p_i}[\psi] - \E_{{\rho}_i}[\psi]| = \delta_{\rho}$. Using a general variance-based deviation bound, we have
    \begin{equation}
        |\E_{p_i}[\psi] - \E_{X}[\psi]| \leq C\sqrt{\text{Var}(\E_X[\psi])}
    \end{equation}
    for a distribution X (either $q_i$ or $\rho_i$), and for some positive constant $C$ that is problem dependent. Since from step one we have shown $\text{Var}_{\rho_i}(\psi) \leq \text{Var}_{q_i}(\psi)$, it immediately follows that $\sqrt{\text{Var}_{\rho_i}(\psi)} \leq \sqrt{\text{Var}_{q_i}(\psi)}$, we then have
    \begin{equation}
        |\E_{p_i}[\psi] - \E_{{\rho}_i}[\psi]| \leq |\E_{p_i}[\psi] - \E_{q_i}[\psi]| \Rightarrow \delta_{\rho} \leq \delta_q.
    \end{equation}

    Finally, let $Z = \E_{q_1}[\psi] - \E_{q_2}[\psi]$ and $Z' = \E_{\rho_1}[\psi] - \E_{\rho_2}[\psi]$. The probability of correctly ranking $p_1$, $p_2$ becomes $\mathbb{P}(Z>0)$ and $\mathbb{P}(Z'>0)$, respectively. Given $\delta_{\rho} \leq \delta_q$ from step two, it is immediate that $\text{Var}(Z') \leq \text{Var}(Z)$. By Chebyshev’s inequality, for any $t>0$,
    \begin{equation}
        \mathbb{P}(|Z'-\E[Z']|\geq t) \leq \frac{\text{Var}(Z')}{t^2} \leq \frac{\text{Var}(Z)}{t^2}.
    \end{equation}
    That is, $Z'$ is more concentrated around its expectation than $Z$. As a result, $\mathbb{P}(Z'>0) \geq \mathbb{P}(Z>0)$. This completes the proof.
\end{proof}

Note that Theorem~\ref{thm:rho_rocks} guarantees that predictability will not degrade uniformly, without requiring the gap between $p_i$ and $q_i$ be small (i.e., without assuming a small sim-to-real gap). In practice, while a small sim-to-real gap is always desirable, as it further reduces absolute noise, the proposed method remains effective even when this gap is significant.

Moreover, determining the optimal value of $k$ that maximizes sim-to-real performance prediction is challenging. While uniform improvement is guaranteed for all $k$, the extent to which the worst-case estimate enhances ranking accuracy remains uncertain and varies with $k$, often in a nonlinear manner (e.g., Fig.~\ref{fig:nominal-scc}). Also, numerical errors in practical computations—such as those arising from numerical solvers used in \eqref{eq:optml_r}—can further introduce instability or suboptimal improvements. For specific applications, users can select $k$ and other related parameters based on the acceptable deviation for their use case or leverage prior knowledge of real-world and simulator fidelity constraints to make an informed decision. A more in-depth discussion of application-specific considerations falls outside the scope of this paper and remains an avenue for future research.

\section{EXPERIMENTS}\label{sec:exp}
This section demonstrates the effectiveness of the proposed worst-case performance estimator as an empirical predictor of real-world performance for RL policies trained in simulators. To begin, we trained a locomotion policy for the Unitree G1 robot using the exact program and hyperparameters specified by~\cite{unitreegithub}, with two different random seeds: 1 and 50. The training results are presented in Fig.~\ref{fig:train_logs}.

\begin{figure}
    \vspace{2mm}
    \centering
    \includegraphics[trim={3.55cm 6.5cm 16.cm 1.8cm}, clip, width=0.82\linewidth]{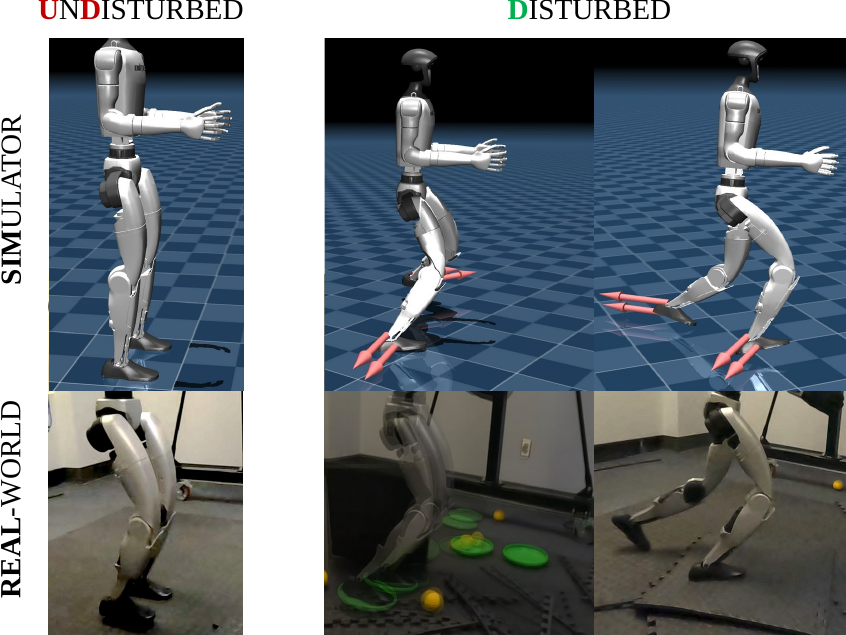}
    \caption{Visual representations of the undisturbed and disturbed testing setups in simulator and real-world. 13 policies (see red segments in Fig.~\ref{fig:train_logs}) and 5 policies (see green segments in Fig.~\ref{fig:train_logs}) are selected, respectively, for the undisturbed and disturbed tests.}
    \label{fig:env_visual}
\end{figure}

We further select two groups of policies from the two rounds of training, highlighted by red and green dashed segments in Fig.\ref{fig:train_logs}, respectively. These two groups are then deployed in two distinct environments, referred to as undisturbed (red) and disturbed (green). Visual representations of these environments are shown in Fig.\ref{fig:env_visual}. The simulator is MuJoCo~\cite{todorov2012mujoco} for all tests (except for the training reward from Fig.~\ref{fig:train_logs}, which is in Isaac Gym~\cite{unitreegithub,isaacgym}). In the real-world undisturbed test, the robot executes an in-place stepping motion, while the corresponding simulated environment attempts to replicate the same conditions as closely as possible. All hyperparameters, policies, and controllable variables remain consistent between simulation and reality. For the disturbed test, the robot navigates an uneven, soft, and slippery terrain composed of foam pads of different thickness, baseballs, and frisbee plates of two different sizes. Notably, the corresponding simulated environment does not include these exact objects or terrain features. Instead, disturbances are introduced by directly applying external forces (up to 20 N) to the ankle joints and foot bodies (see Fig.~\ref{fig:env_visual}). For each group and its corresponding testing environment, policies are first evaluated in the real-world to obtain their expected reward using a predefined reward function. This evaluation is conducted with a confidence interval of $0.05$ and across various RHW threshold values. The policies are then ranked in ascending order based on their obtained reward expectations. This ranked sequence serves as the ground-truth reference for assessing the predictive accuracy of different sim-to-real performance indicators, including the proposed method in this paper.

\begin{figure}[t]
    \centering
    \includegraphics[trim={1.cm 1.5cm 3.5cm 2cm}, clip, width=0.9\linewidth]{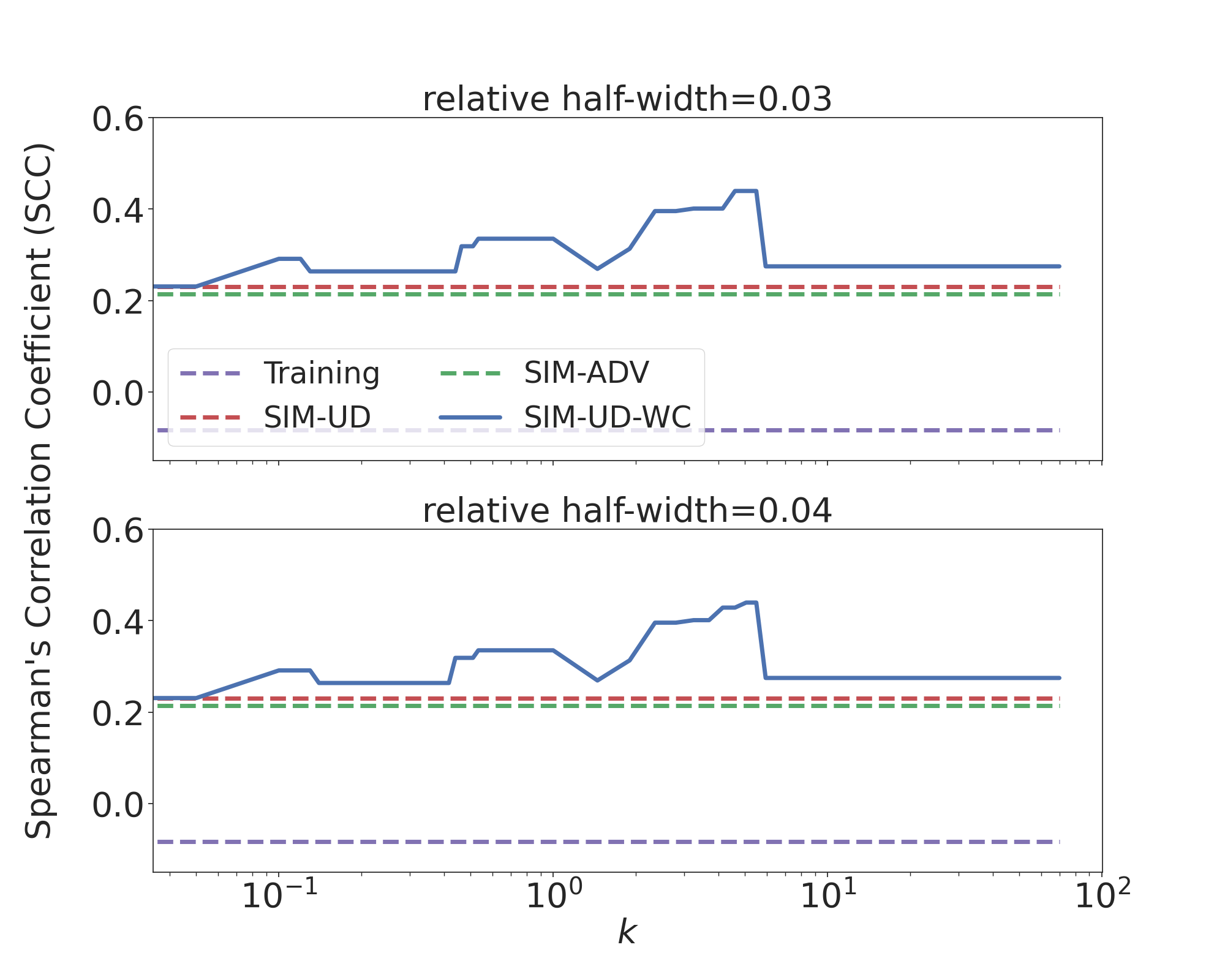}
    \caption{Comparison of predicted policy rankings from various sim-to-real performance indicators against real-world evaluated rankings in the undisturbed testing group, measured using Spearman’s correlation coefficient (SCC) across two different RHW levels (confidence level $0.05$). The proposed worst-case estimates exhibit some variation depending on the specified KL-divergence bound $k$. In contrast, other indicators, such as direct simulation-based estimates and adversarial simulation evaluations, remain constant and appear as flat lines.}
    \label{fig:nominal-scc}
\end{figure}

\begin{figure}
    \centering
    \includegraphics[trim={1.cm 1.5cm 3.5cm 2cm}, clip, width=0.9\linewidth]{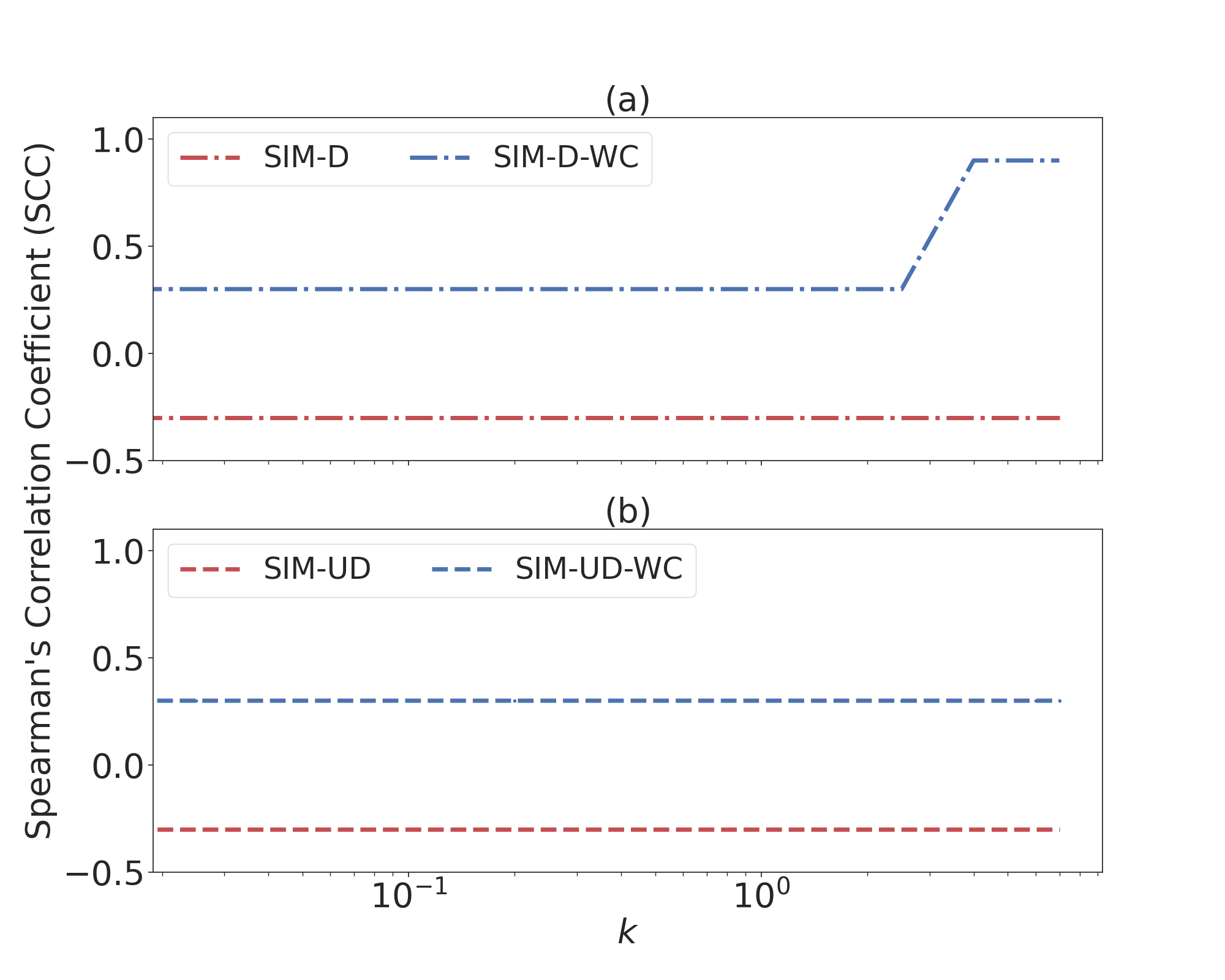}
    \caption{Comparison of predicted policy rankings from various sim-to-real performance indicators against real-world evaluated rankings in the disturbed testing group using the stability reward function $r_2$, measured using Spearman’s correlation coefficient (SCC) at the RHW level of $0.03$ (with confidence level $0.05$): (a) and (b) are using the \textbf{disturbed} and \textbf{undisturbed} testing environment, respectively, in simulation and its corresponding worst-case estimates to indicate \textbf{real-world performance under disturbances}.}
    \label{fig:terrain-scc}
\end{figure}

The comparison between ranked sequences is performed using Spearman’s rank correlation coefficient~\cite{sedgwick2014spearman}, which measures the monotonic relationship between two rankings. This coefficient takes values in $[-1,1]$, where $1$, $0$, and $-1$, indicates perfect rank match, no correlation, and completely inverted ranking, respectively.

\begin{figure*}[ht]
    \centering
    \includegraphics[trim={1.cm .3cm 5.cm 1.cm}, clip,width=0.88\linewidth]{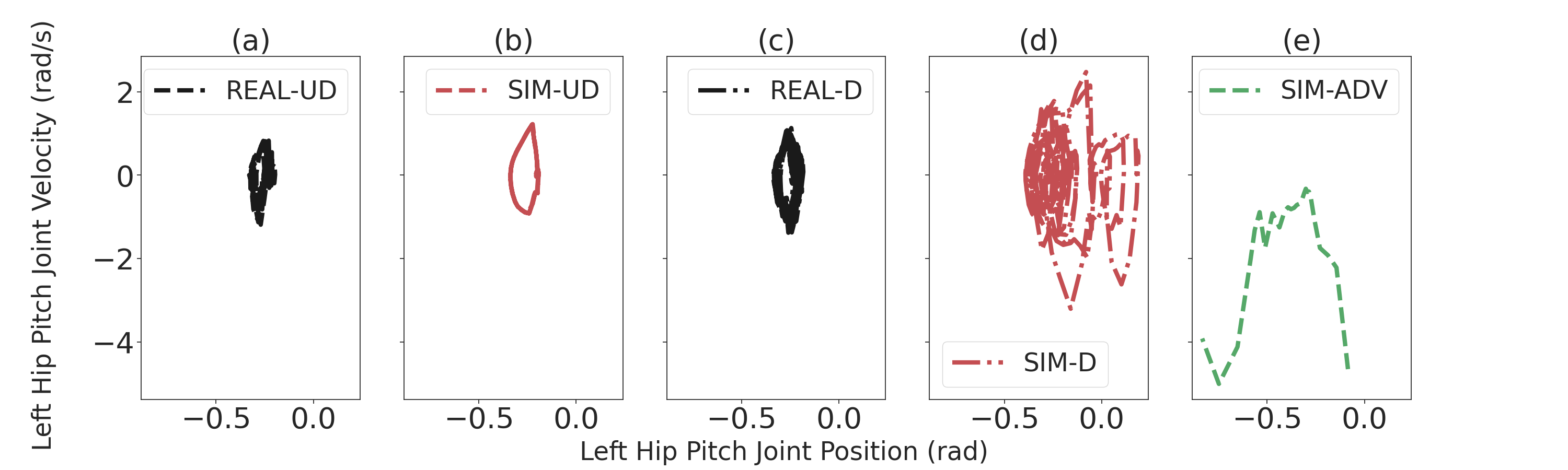}
    \caption{The limit walking cycle of the same policy at the left hip pitch joint for $10$ seconds (with (e) being an exception as the policy cannot ``survive that long" against the adversarial setup) under various simulated tests and real-world experiments.}
    \label{fig:phase-portrait}
\end{figure*}

The reward functions used in this study are largely derived from the original reward function used during policy training~\cite{unitreegithub}. However, terms related to the pelvis center of mass (CoM) positions and linear velocities have been removed, as these states lack direct sensor measurements on the real-world robot. While estimations through filtering and kinematic constraints are possible, the measurement noise is too significant to ensure rigorous and fair testing. The key terms used in the reward function design for each state, denoted as $r=[r_{x}, r_{\dot{x}}, r_j, r_{\dot{j}}, r_h, r_a]^T$, includes the following components: $r_{x}$ (encouraging an upright torso orientation), $r_{\dot{x}}$ (penalizing excessive torso angular velocity), $r_{j}$ (penalizing extreme joint positions), $r_{\dot{j}}$ (penalizing large joint velocities), $r_{h}$ (penalizing large hip angles), and an alive bonus $r_{a}$, along with the weights. The final reward is computed as a weighted linear combination of these terms, given by $\omega^T r$. Detailed descriptions of each term and their respective scaling factors can be found in~\cite{unitreegithub}. This particular formulation is referred to as the performance reward, $r_1$, throughout the remainder of this paper. Additionally, we introduce a reward variant that emphasizes the stability performance of the robot. This variant retains only $r_x$ and $r_{\dot{x}}$ from the performance reward formulation and is referred to as the stability reward, $r_2$. The performance reward $r_1$ is primarily used in the undisturbed tests described in Section~\ref{sec:exp:ud}, while both $r_1$ and $r_2$ are considered in the disturbed tests in Section~\ref{sec:exp:d}.

\begin{remark}
    The reward function also plays a crucial role in reducing the dimensionality of the state space, facilitating a more efficient solution to \eqref{eq:optml_r}. Generally, the state space $S$ would need to encompass all direct sensor readings contributing to the reward, including joint positions, velocities, torso CoM states, and more. This could lead to a distribution $q$ of significantly high dimensionality, making optimization computationally demanding. However, in this specific formulation of expected reward estimation, the reward function serves as a predefined mapping from the complex original state space to a reduced reward state space. Consequently, performing distributional analysis over this reduced reward state space is equivalent to analyzing the original high-dimensional space but with significantly lower computational complexity. This reduction enables a more tractable and efficient optimization process without compromising accuracy.
\end{remark}

\subsection{Undisturbed locomotion}\label{sec:exp:ud}
In addition to the real-world (REAL-UD) and simulated (SIM-UD) setups mentioned earlier, the policies in the undisturbed locomotion group are also evaluated using an adversarial testing framework similar to the one proposed in~\cite{shi2024rethinking}. Each policy is alongside a dedicated ``attacker" trained to introduce disturbance forces at all joints and body links within a limited magnitude. The trained attacker is then used to assess the policy's robustness, a process referred to as SIM-ADV for the remainder of this section.

Fig.~\ref{fig:nominal-scc} summarizes a variety of comparisons made between the real-world experiment based policy rankings and various other indicators, including the proposed worst-case estimate. Reward function used in these studies is $r_1$. Specifically, the training reward is directly obtained from the policy’s training process (similar to the values shown in Fig.~\ref{fig:train_logs}), and its SCC against the real-world ranking is negative, indicating that the two ranked sequences exhibit no meaningful correlation. SIM-UD shows a slightly better agreement with real-world rankings than SIM-ADV, though both remain below 0.25 across all RHW thresholds. In contrast, the proposed worst-case estimate, using SIM-UD results as $q$, validates the theoretical guarantees from Theorem~\ref{thm:rho_rocks} and consistently outperforms, or at worst matches, the ranking predictions made by the aforementioned indicators. In the best-case scenario ($k=2.5$), the SCC value reaches as high as 0.55, demonstrating a significantly stronger predictive capability.

\subsection{Disturbed locomotion}~\label{sec:exp:d}
Since the disturbed tests pose a higher risk in real-world experiments, a different group of policies is selected for evaluation (highlighted in green in Fig.~\ref{fig:train_logs}). Their expected performance rankings are determined using both $r_1$ and $r_2$ across different RHW threshold levels, with a confidence level of $0.05$. Among the five policies in this study, when evaluated using the performance reward $r_1$, the rankings obtained from SIM-D and REAL-D (i.e., real-world performance) exhibit an SCC value of 1.0 across all cases, indicating perfect alignment. As expected, the worst-case estimate does not degrade the ranking accuracy, consistently achieving an SCC of 1.0 across various selections of $k$. This outcome is attributed to the construction of $r_1$, which balances stability, efficiency, and pose alignment.

To further investigate (under a ``disagreed" sim-to-real setting), we evaluate an alternative reward function, $r_2$, which specifically emphasizes stability performance. The results are presented in Fig.~\ref{fig:terrain-scc}. When using direct simulated performance in both disturbed (Fig.\ref{fig:terrain-scc}(a)) and undisturbed (Fig.\ref{fig:terrain-scc}(b)) settings, both indicators yield an SCC value of $-0.3$ when compared to real-world performance under disturbances. This indicates that these indicators are not correlated with real-world performance rankings (REAL-D) at all. In contrast, SIM-UD-WC consistently achieves an SCC value of $0.3$, demonstrating a significant $0.6$ improvement over SIM-UD. Furthermore, SIM-D-WC consistently produces positive SCC values, including some almost-perfect alignment with REAL-D performance rankings ($\text{SCC}=0.9$).

\begin{figure}
    \centering
    \includegraphics[trim={.5cm 1.5cm 3.cm 3.cm}, clip,width=0.9\linewidth]{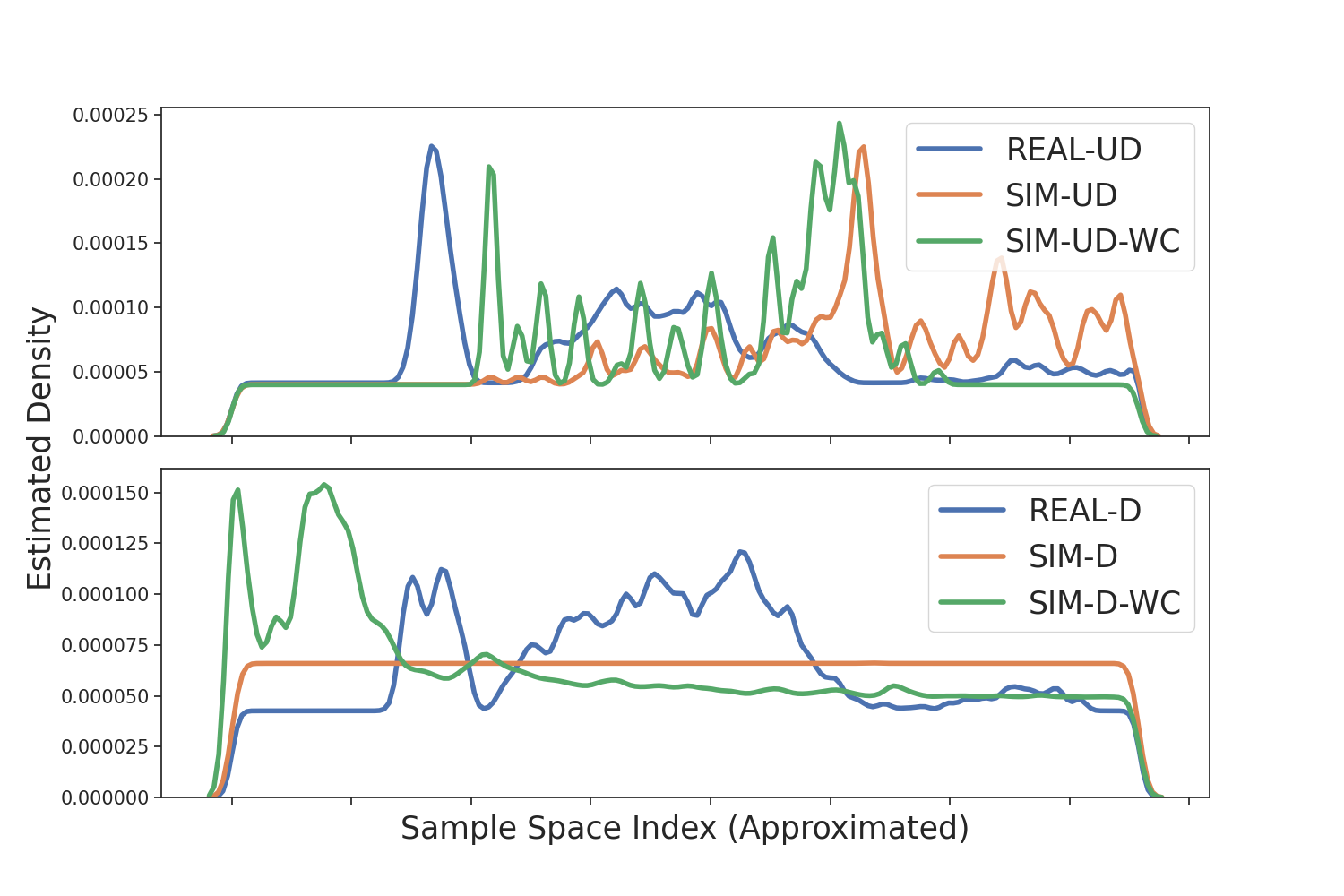}
    \caption{The kernel density estimate (KDE) plot illustrates the distributions (in the reward state space) of the same policy tested in SIM-D, SIM-UD, REAL-D, and REAL-UD, along with the corresponding worst-case estimates for $k=6.0$. Notably, this same bound improves the SCC of SIM-UD and SIM-D from $-0.3$ to $0.3$ and $0.9$, respectively, when compared to the REAL-D performance rankings. Note that the KDE provides a significantly simplified representation of the underlying distribution due to the high dimensionality of the state space, offering only a partial view of the true distributional differences across environments. As a result, while the variations in state space coverage across different environments are evident—partially illustrated in Fig.~\ref{fig:phase-portrait}—these differences may not be visually apparent in the approximated KDE analysis.}
    \label{fig:dist}
\end{figure}

Comparing subfigures (b), (c), and (d) in Fig.~\ref{fig:phase-portrait} provides insight into why SIM-UD offers a better prediction of REAL-D than SIM-D. The simulated disturbances in SIM-D force the joints to explore a much larger state space in a more random, non-periodic manner, which fail to approximate real-world variability. Similarly, SIM-ADV induces a significantly different state space coverage, further highlighting the discrepancies between different simulation strategies in capturing real-world behavior.

However, sim-to-real differences persist across corresponding subplots under both disturbed and undisturbed conditions in Fig.~\ref{fig:phase-portrait}, highlighting the inherent challenges of direct transfer of ranked policies. Fig.\ref{fig:dist} provides an alternative perspective by visualizing simplified distributions of another policy across various configurations. Each subfigure compares $p$ (real-world), $q$ (simulator), and $\rho$ (worst-case estimate at $k=6.0$) under the same disturbed and undisturbed settings. Notably, in terms of distributional alignment, the worst-case estimate $\rho$ does not necessarily move $q$ closer to $p$. However, both theoretically and empirically, it reduces the variance of the reward estimate, leading to a more stable and reliable performance predictor in sim-to-real transfer scenarios.

\section{CONCLUSIONS AND FUTURE WORK}
To the best of our knowledge, this paper makes one of the first attempts to address sim-to-real policy transfer at the \emph{post-convergence} stage. The proposed worst-case estimate, formulated through~\eqref{eq:optml_r}, paves the way for future advancements, including enhanced adversarial testing for handling more complex dynamics $f$ and state spaces $S$. Additionally, addressing the limitations imposed by the curse of dimensionality in solving~\eqref{eq:optml_r} remains an important direction for future research.

\bibliographystyle{IEEEtran}
\bibliography{bib}

\end{document}